\documentclass[12pt]{article}

\usepackage[english]{babel}

\usepackage[top=2.5cm,bottom=2.5cm,left=3cm,right=3cm,marginparwidth=1.75cm]{geometry}

\usepackage{amsmath}
\usepackage{amsthm}
\usepackage{mathtools}
\usepackage{amsfonts}
\usepackage{amssymb}
\usepackage{wasysym}
\usepackage{bbm}
\usepackage{mathbbol} 
\usepackage{xargs}
\usepackage{commath} 

\usepackage{graphicx}
\usepackage[colorlinks=true, allcolors=blue]{hyperref}
\usepackage[noabbrev,nameinlink,capitalise]{cleveref}
\usepackage{thmtools} 
\usepackage{cite}
\usepackage{natbib}

\usepackage{float}
\usepackage{calc}
\usepackage{tikz}
\usepackage{tikz-cd}
\usetikzlibrary{graphs,arrows,decorations.pathmorphing,decorations.markings,fit,positioning,hobby,arrows.meta}
\usepackage{subcaption}


\theoremstyle{definition}
\newtheorem{theorem}{Theorem}[section]

\newtheorem{corollary}[theorem]{Corollary}
\newtheorem{remark}[theorem]{Remark}

\newtheorem{proposition}[theorem]{Proposition}
\theoremstyle{definition}
\newtheorem{definition}[theorem]{Definition}

\newtheorem{example}[theorem]{Example}

\newcommand{\samethanks}[1][\value{footnote}]{\footnotemark[#1]}

\newcommand{\SO}{\mathrm{SO}}
\newcommand{\GL}{\mathrm{GL}}

\newcommand{\torsor}{\mathcal{P}} 
\newcommand{\Hom}{\mathrm{Hom}} 





\usepackage{longtable}

\usepackage{booktabs}
\usepackage[load-configurations=version-1]{siunitx} 
\usepackage{quiver}
\usepackage{tikz-cd}
\usepackage{enumitem}

 \title{Learning from Frustration: Torsor CNNs on Graphs}
\author{
  Daiyuan Li\thanks{Equal contribution, randomized order} \and
  Shreya Arya\samethanks \and
  Robert Ghrist\samethanks
}

\begin{document}

\maketitle

\begin{abstract}

Most equivariant neural networks rely on a single global symmetry, limiting their use in domains where symmetries are instead local. We introduce Torsor CNNs, a framework for learning on graphs with local symmetries encoded as edge potentials—group-valued transformations between neighboring coordinate frames. We establish that this geometric construction is fundamentally equivalent to the classical group synchronization problem, yielding: (1) a Torsor Convolutional Layer that is provably equivariant to local changes in coordinate frames, and (2) the frustration loss—a standalone geometric regularizer that encourages locally equivariant representations when added to any NN's training objective. The Torsor CNN framework unifies and generalizes several architectures—including classical CNNs and Gauge CNNs on manifolds—by operating on arbitrary graphs without requiring a global coordinate system or smooth manifold structure. We establish the mathematical foundations of this framework and demonstrate its applicability to multi-view 3D recognition, where relative camera poses naturally define the required edge potentials.
\end{abstract}

\section{Introduction}
\label{sec:intro}

Many learning problems involve data with local symmetries that vary across the domain. Consider a network of sensors measuring orientations in their own reference frames, a molecular structure where atomic neighborhoods exhibit different rotational symmetries, or a multi-view recognition, where each camera has its own coordinate system. These domains lack a global notion of orientation: each element knows only its local measurements and how its neighbors appear relative to its own frame. While existing equivariant architectures handle global symmetries (CNNs on grids, G-CNNs on homogeneous spaces) or require smooth manifold structure (Gauge CNNs), many real-world networks have arbitrary local transformations that don't fit these frameworks. See Section \ref{sec:works} for a detailed review.

\paragraph{Our Approach.} We introduce \emph{Torsor CNNs} for learning on graphs with local coordinate systems. The key input is an edge potential $\psi_{uv}$ for each edge $(u,v)$—this is a group element (like a rotation matrix) that tells us how to transform vectors from vertex $v$'s coordinate system to vertex $u$'s coordinate system. For example, if vertices are cameras, $\psi_{uv}\in \mathrm{SO}(3)$ is the rotation that aligns camera $v$'s view with camera $u$'s. These edge potentials let us build neural networks that respect the graph's geometry. Specifically, we construct layers that are gauge-equivariant: if someone changes all the local coordinate systems (rotating each camera differently, for instance), our network's output changes correspondingly. This means the network learns geometric relationships, not arbitrary coordinate choices.

 \paragraph{Connection to Synchronization.} The key insight is that this construction is mathematically equivalent to a classical problem called feature \emph{synchronization}. In this problem, we have features $f_v$ (vectors in some space $F$) at each vertex. The group $G$ acts on these features through a linear transformation $\rho$ -- for instance, if features are 3D vectors and $G = \text{SO}(3)$, then $\rho$ rotates vectors. Features are "synchronized" when they satisfy: $f_u = \rho(\psi_{uv}) f_v \quad \text{for every edge } (u,v)$.
This equation says that the feature at $u$ should equal the feature at $v$, transformed according to the edge potential. This connection immediately gives us a practical tool. We can measure how far any feature assignment is from being synchronized using the \emph{frustration loss}:
\[
L_{\text{frustration}}(f) = \sum_{(u,v) \in E} \|f_u - \rho(\psi_{uv})f_v\|^2
\]
 When this loss is zero, features perfectly respect the geometry. When it's large, features violate the geometric constraints. Remarkably, this loss can be added into diverse neural network objectives to encourage geometric consistency—you don't need specialized layers.

\paragraph{Generality.} Our framework unifies and extends existing methods.  Discretized versions of CNNs, G-CNNs, and Gauge CNNs all become special cases with particular choices of graphs and edge potentials. Unlike Gauge CNNs, which require edge potentials derived from parallel transport on a smooth manifold, our framework operates on any network-structured domain with arbitrary edge potentials, from measurements, domain knowledge, or learned from data.

\paragraph{Applications. } As a concrete example, we demonstrate that the framework can be applied to multi-view 3D object recognition, where graph nodes are camera views and edge potentials are their known relative $SO(3)$ rotations. See Appendix~\ref{sec:practical}. 

\paragraph{Contributions:} (1) We develop a discrete framework for local symmetries on graphs, using edge potentials to encode coordinate transformations between neighbors. (2) We prove that learning with local coordinate systems is equivalent to feature synchronization, connecting geometric deep learning to classical robotics or vision problems. (3) We provide two practical tools: Torsor Convolutional Layers that maintain geometric consistency by construction, and the frustration loss that encourages any neural network to learn geometrically consistent features. 

\section{Related Works: From Global to Local Equivariance}
\label{sec:works}

Equivariant deep learning has established itself as a powerful paradigm for improving data efficiency and generalization by incorporating the underlying geometric domain symmetries directly into neural architectures. In classical computer vision settings, such as planar images and spherical signals, global symmetry groups act transitively on the underlying geometric domain, e.g. 2D grids and 3D sphere. These global groups allow us to define convolutional operators that respect these invariances \cite{cohen2016group,cohen2019general}.  
Mathematically, these domains are modeled as \emph{homogeneous spaces} $M \cong G/H$, where $G$ is a Lie group acting transitively on $M$ and $H$ is a stabilizer subgroup. Feature fields are naturally represented as $H$-equivariant functions on $G$, satisfying the \emph{Mackey condition}, which can be interpreted in terms of sections of an associated vector bundle \cite{aronsson2022homogeneous, cohen2019general}. In this setting, $G$-equivariant linear maps correspond to convolutions with bi-equivariant kernels, recovering CNNs on $\mathbb{R}^2$ and spherical CNNs as special cases. However, many scientific and geometric learning problems lack such global transitive symmetries. On general manifolds, one must instead find local symmetries formalized by gauge theory \cite{cohen2019gauge,cohen2021equivariant}. Gauge-equivariant CNNs replace the global group action with a principal $G$-subbundle $P \subset FM$ with projection $\pi:P\to M$, where 
$FM=\bigsqcup_{p \in M} F_p=\{[v_1,\dots,v_d]\mid\{v_1,\dots,v_d\}\text{ is a basis of }T_pM\}$. Features are modeled as sections of associated vector bundles, and equivariance is guaranteed by defining convolutional operations using parallel transport induced by a connection on $P$. Yet, in discrete settings such as graphs, the absence of a smooth structure requires a new formulation. Inspired by the synchronization problem \cite{gao2021synchgeom} and the language of sheaves \cite{hansen2019toward, hansen2020sheaf, bodnar2021neural}, we adapt the recently introduced network torsors of \cite{ghrist2025obstructions}. 

\subsection{G-CNNs on Homogeneous Spaces}

A homogeneous space $M \cong G/H$ admits a transitive action by a Lie group $G$, with stabilizer $H$ at a base point. Feature fields on $M$ are represented as $H$-equivariant functions on $G$ satisfying the \textbf{Mackey condition}:
\[ 
f: G \to V \quad\text{with}\quad f(gh) = \rho(h)^{-1} f(g)\;\;\forall\;h\in H
\] 
where $\rho: H \to \GL(V)$ is a representation. This corresponds to sections of the associated bundle $E = G \times_H V$. Any $G$-equivariant linear map between such feature spaces can be written as a convolution with a kernel $\kappa: G \to \Hom(V, W)$ satisfying the bi-equivariance constraint:
\[ 
\kappa(h_2\, g\, h_1) = \rho_W(h_2)\, \kappa(g)\, \rho_V(h_1)^{-1}\quad \forall h_1,h_2 \in H
\]
This framework recovers classical CNNs on $\mathbb{R}^2$ and spherical CNNs \cite{cohen2018spherical} as special cases.

\subsection{Gauge Equivariant CNNs on Manifolds}

On general manifolds without global symmetry, local symmetries are formalized using a principal $G$-bundle $P \to M$ with structure group $G$ (e.g., $\SO(d)$ for oriented manifolds). Feature fields are sections of associated bundles $E = P \times_\rho F$. A gauge transformation $\gamma:U\to G$ acts on local representations as $f'(x) = \rho(\gamma(x))^{-1}f(x)$.

\noindent Gauge-equivariant convolutions use a connection to parallel-transport features between fibers. In local coordinates, the convolution takes the form:
\[
(\Phi f)(p) = \int_{\mathbb{R}^d} \kappa(v) \, \rho(g_{p \leftarrow q_v}) f(q_v) \, dv
\]
where $g_{p \leftarrow q_v} \in G$ represents parallel transport from $q_v$ to $p$. The kernel must satisfy $\kappa(g \cdot v) = \rho_{\text{out}}(g) \kappa(v) \rho_{\text{in}}(g)^{-1}$ for gauge-invariance \cite{cohen2019gauge,cohen2021equivariant}.

\noindent In discrete settings such as graphs, the absence of smooth structure requires new formulations. Inspired by synchronization \citep{singer2011angular,gao2021synchgeom} and sheaf theory \citep{hansen2019toward, hansen2020sheaf, bodnar2021neural}, we introduce network torsors that recover these constructions while enabling new capabilities for heterogeneous local symmetries.

\section{Mathematical Background}

Here we develop the discrete geometric structures underlying torsor CNNs. We begin with the group synchronization problem, then formalize local consistency via network sheaves, and define network $G$-torsors following \cite{ghrist2025obstructions} as discrete analogues of principal bundles.

\subsection{The Synchronization Problem}
\label{sec:sync}

Many problems in robotics \cite{rosen2019se}, structural biology \cite{singer2018mathematics}, and distributed sensing \cite{singer2011angular} involve recovering unknown global states from noisy relative measurements.

\begin{definition}
\label{def:group-sync}
Given a graph $X=(V,E)$ and edge measurements $\{\psi_{uv} \in G \mid \{u,v\} \in E\}$ satisfying $\psi_{vu} = \psi_{uv}^{-1}$, the \textbf{group synchronization problem} seeks a global assignment of states $\{g_v \in G\}_{v \in V}$ that best satisfies, for every edge $\{u,v\}\in E$,
\[
g_u = \psi_{uv}\,g_v.
\]
Here $\psi_{uv}$ is interpreted as the transformation mapping the state in frame $v$ to the state in frame $u$. A set of measurements is \textbf{consistent} if such an assignment exists and the relation holds with equality on all edges.
\end{definition}

\begin{example}[Planar Rotation Synchronization]
In the special case $G=\SO(2)$, each vertex represents an agent with an unknown orientation $g_v\in\SO(2)$ (equivalently, an angle $\theta_v\in(-\pi,\pi]$), and each edge measurement $\psi_{uv}\in\SO(2)$ encodes the relative rotation. This underlies sensor network calibration \cite{singer2011angular}, structure from motion \cite{eriksson18motion} and multi-view registration \cite{Arrigoni2016SpectralSE3}.
\end{example}

\noindent Consistency requires the product of transformations around any cycle to be the identity. In practice, noisy measurements violate this condition, and one seeks an assignment minimizing a global error objective known as \emph{frustration} \cite{singer2011angular}. This nonlinear problem can be linearized using a group representation:

\begin{definition}
\label{def:feature-sync}
Let $\rho: G \rightarrow \GL(F)$ be a linear representation. The \textbf{feature synchronization problem} seeks an assignment of feature vectors $\{f_v \in F\}_{v \in V}$ such that for every edge $\{u,v\} \in E$,
\[
f_u = \rho(\psi_{uv})\,f_v.
\]
\end{definition}

\subsection{Network Sheaves and Global Sections}
\label{sec:sheaves}

\begin{definition}
\label{def:network-sheaf}
A \textbf{network sheaf} $\mathcal{F}$ on a graph $X=(V,E)$ assigns a space $\mathcal{F}_v$ to each vertex $v\in V$ and a space $\mathcal{F}_e$ to each edge $e\in E$ (the \emph{stalks}). For each incidence of a vertex $v$ on an edge $e$, there is a morphism $\mathcal{F}_{v\to e}: \mathcal{F}_v \rightarrow \mathcal{F}_e$ (the \emph{restriction map}).
\end{definition}

\begin{definition}
A \textbf{global section} of a sheaf $\mathcal{F}$ is an assignment of elements $s_v \in \mathcal{F}_v$ to vertices such that for every edge $e=\{u,v\}$ the compatibility condition holds:
\[
\mathcal{F}_{u \to e}(s_u) = \mathcal{F}_{v\to e}(s_v).
\]
The set of all global sections is denoted $\Gamma(X,\mathcal{F})$.
\end{definition}

\subsection{Network Torsors from Edge Potentials}
\label{sec:torsors}

Our discrete formulation begins with \emph{edge potentials}, which serve as discrete connections from which we construct network torsors.
\begin{definition}
\label{def:edge_potential_first}
Given a graph $X=(V,E)$ and a group $G$, an \textbf{edge potential} is a function $\psi$ that assigns a group element $\psi_{uv}\in G$ to each oriented edge $e=(u,v)$, satisfying the antisymmetry property $\psi_{uv}=\psi_{vu}^{-1}$.
\end{definition}
\noindent The edge potential $\psi_{uv}$ maps a reference frame at vertex $v$ to the corresponding frame at vertex $u$. These local reference frames have no canonical origin -- they form a \emph{torsor}:
\begin{definition}
\label{def:g-torsor}
For a group $G$, a \textbf{$G$-torsor} is a nonempty set $P$ with a right action of $G$ that is free and transitive: for any $p,q\in P$ there exists a unique $g\in G$ with $q=p\cdot g$.
\end{definition}
\begin{definition}
\label{def:network-torsor-from-potential}
Let $\psi$ be an edge potential on a graph $X$ with group $G$. The \textbf{network $G$-torsor from $\psi$}, denoted $\torsor^{\psi}$, is the network sheaf defined as follows:
\begin{itemize}[noitemsep, topsep=0pt]
\item \textbf{Stalks:} $\torsor^{\psi}_v$ and $\torsor^{\psi}_e$ are the group $G$ itself, viewed as a $G$-torsor under right multiplication.
\item \textbf{Restriction maps:} For an oriented edge $e=(u,v)$,
\[
\torsor^{\psi}_{u \to e}(p)=p,\qquad 
\torsor^{\psi}_{v \to e}(p)=\psi_{uv}\,p,
\]
where juxtaposition denotes the group product in $G$.
\end{itemize}
\end{definition}

\noindent This construction yields a valid network $G$-torsor (see Appendix~\ref{app:network-torsor}). The compatibility condition for a global section $\{\sigma_v\}$ is exactly the group synchronization equation $\sigma_u=\psi_{uv}\,\sigma_v$. While we must choose an orientation $(u,v)$ to write these formulas, the antisymmetry property ensures orientation independence: on the reversed orientation one obtains $\sigma_v=\psi_{vu}\,\sigma_u$, which is equivalent since $\psi_{vu}=\psi_{uv}^{-1}$.

\subsection{Gauge and Gauge Transformations}
\label{sec:gauge}

To perform computations we coordinatize each torsor stalk using $1_G\in G$ as the origin—the \textbf{identity gauge}. Any other choice of reference frames is equally valid:

\begin{definition}
A \textbf{gauge transformation} is a map $\gamma:V\to G$, representing a change of reference frame at each vertex $v$ by the group element $\gamma_v$. Under $\gamma$, an edge potential $\psi$ transforms as
\[
\psi'_{uv}=\gamma_u^{-1}\,\psi_{uv}\,\gamma_v\qquad\text{for every oriented edge }(u,v).
\]
\end{definition}

\begin{definition}
Two edge potentials $\psi$ and $\psi'$ are \textbf{gauge-equivalent} if there exists a gauge transformation $\gamma$ relating them via the transformation law above.
\end{definition}

\section{Torsor CNNs on Graphs}
\label{sec:torsorCNNs}

We now use the network torsor structure to construct convolutional layers on graphs that are equivariant to local gauge transformations. The approach is a discrete analogue of gauge-equivariant CNNs on manifolds. While feature fields (global sections) represent the geometrically consistent features we ultimately seek, practical neural networks must handle arbitrary feature assignments that may not satisfy the strict synchronization condition. Therefore, our torsor convolution layer operates on the larger space $F^V$ of all feature assignments, while preserving the subspace of global sections when present.

\subsection{Associated Vector Sheaves and Feature Fields}

Given a graph $X=(V,E)$, a group $G$, and an edge potential $\psi :E\to G$, we have the induced network $G$-torsor (Definition~\ref{def:network-torsor-from-potential}). Together with a group representation $\rho: G \to \GL(F)$ (where $F$ is a feature vector space), we can construct an associated vector sheaf $\mathcal{E}$, analogous to the associated vector bundle in the continuous setting.

\begin{figure}
    \centering
    \includegraphics[width=\linewidth]{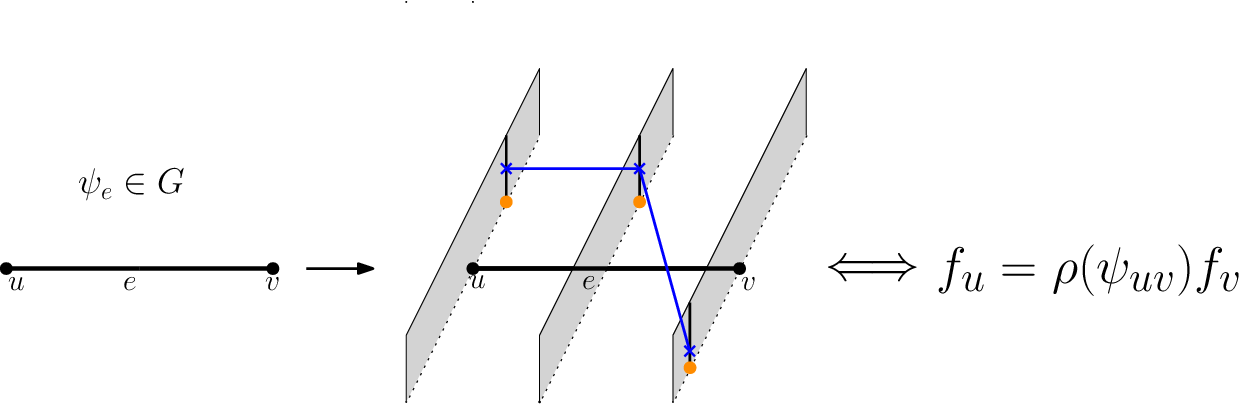}
    \caption{(Left) Edge $e = (u,v)$ with edge potential $\psi_e$ (or sometimes written as $\psi_{uv}$) mapping between local frames. (Right) Associated vector sheaf construction where each torsor stalk (dotted line) carries feature vector spaces at different reference points. Global section (blue) corresponds to synchronized features satisfying $f_u = \rho(\psi_{uv})f_v$ across the edge.}
    \label{fig:placeholder}
\end{figure}

\begin{definition}
\label{def:assoc-sheaf}
Let $\psi$ be an edge potential on $X=(V,E)$ with group $G$, and let $\rho: G \to \GL(F)$ be a finite-dimensional representation. The \textbf{associated vector sheaf} $\mathcal{E} = \torsor^\psi \times_\rho F$ is defined as follows:

\begin{itemize}
\item To each vertex and edge $x$, it assigns the vector space $\mathcal{E}_x := (\torsor^\psi_x \times F) / \sim$, where $(p\cdot g, w) \sim (p, \rho(g)w)$ for $p \in \torsor^\psi_x, g \in G, w \in F$.

\item The restriction maps $\mathcal{E}_{v\to e}: \mathcal{E}_v \rightarrow \mathcal{E}_e$ are induced by those of $\torsor^\psi$: $\mathcal{E}_{v\to e}([p,w]) := [\torsor^\psi_{v\to e}(p), w]$. 
\end{itemize}
\end{definition}

\noindent An element $[p,w] \in \mathcal{E}_v$ can be interpreted as a feature vector $w$ expressed in the frame $p$. The equivalence relation ensures that if we change the frame of reference by $g$, the coordinates of the vector transform by $\rho(g)$.

\begin{definition}
A \textbf{feature field} on the graph is a global section $\sigma$ of the associated vector sheaf $\mathcal{E}$. A section assigns an element $\sigma_v \in \mathcal{E}_v$ to each vertex $v$ such that for every edge $e=(u,v)$, the compatibility condition $\mathcal{E}_{u \to e}(s_u) = \mathcal{E}_{v\to e}(s_v)$ is satisfied.
\end{definition}

\noindent While a feature field $\sigma$ is an abstract object, computation requires a representation as a function $f: V \to F$. Since our edge potential $\psi$ corresponds to the identity gauge (where the identity $1_G \in G$ is the implicit reference frame at each vertex), we can represent any feature field as a function $f:V \to F$ solving the feature synchronization problem.

\begin{proposition}[Feature Fields as Solutions to Feature Synchronization]
\label{prop:sections-equals-sync}
 Given an edge potential $\psi$, there is a canonical bijection between feature fields $\Gamma(X, \mathcal{E})$ and functions $f:V \to F$ satisfying the feature synchronization condition:
\[ \Gamma(X, \mathcal{E}) \longleftrightarrow \{f: V \to F \mid f_u = \rho(\psi_{uv}) f_v \text{ for every edge } e=(u,v)\} \]
\noindent The bijection maps a section $\sigma$ to its representation $f$ in the identity gauge via $\sigma_v = [1_G, f_v]$, where $1_G \in G$ is the identity element. Moreover this correspondence is gauge-invariant: under a gauge transformation $\gamma: V \to G$ (transforming $\psi_{uv} \mapsto \psi_{uv}' :=\gamma_u^{-1}\psi_{uv}\gamma_v$), the same section is represented by $f'_v = \rho(\gamma_v)^{-1}f_v$, with the transformed features has the same form i.e. it satisfies the synchronization condition $f'_u = \rho(\psi_{uv}')f'_v$.

\end{proposition}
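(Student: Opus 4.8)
The plan is to reduce everything to the freeness of the $G$-torsor stalks (Definition~\ref{def:g-torsor}), which supplies canonical coordinates on $\mathcal{E}$, and then simply unwind Definitions~\ref{def:network-torsor-from-potential} and~\ref{def:assoc-sheaf}. First I would record a \emph{normalization} observation: since the stalk $\torsor^\psi_x = G$ acts on itself freely and transitively, every class in $\mathcal{E}_x = (\torsor^\psi_x \times F)/{\sim}$ has a \emph{unique} representative with first coordinate $1_G$. Indeed $[p,w] = [1_G\cdot p, w] = [1_G, \rho(p)w]$, and $[1_G, w] = [1_G, w']$ forces $w = w'$ because $1_G\cdot g = 1_G$ only for $g = 1_G$. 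Hence $[1_G, w]\mapsto w$ is a bijection (in fact an $F$-linear isomorphism) $\mathcal{E}_x \cong F$ at every vertex and edge. Consequently a section $\sigma$ is recorded by the function $f\colon V\to F$ with $\sigma_v = [1_G, f_v]$, and conversely any $f$ yields the assignment $v\mapsto[1_G, f_v]$; these two assignments are mutually inverse, so for the first claim it only remains to check that $\sigma$ is a global section iff $f$ satisfies the synchronization equation.

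Next I would translate the compatibility condition on an oriented edge $e=(u,v)$. By Definition~\ref{def:network-torsor-from-potential} we have $\torsor^\psi_{u\to e}(1_G)=1_G$ and $\torsor^\psi_{v\to e}(1_G)=\psi_{uv}$, so by Definition~\ref{def:assoc-sheaf},
\[
\mathcal{E}_{u\to e}(\sigma_u)=[1_G, f_u],\qquad \mathcal{E}_{v\to e}(\sigma_v)=[\psi_{uv}, f_v]=[1_G, \rho(\psi_{uv})f_v].
\]
By the uniqueness in the normalization observation, the equality $\mathcal{E}_{u\to e}(\sigma_u)=\mathcal{E}_{v\to e}(\sigma_v)$ in $\mathcal{E}_e$ holds exactly when $f_u=\rho(\psi_{uv})f_v$. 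Imposing this over all edges gives the stated bijection, realized by $\sigma_v=[1_G, f_v]$.

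For gauge-invariance I would exhibit the canonical identification induced by $\gamma$. Define $\beta_v\colon\torsor^\psi_v\to\torsor^{\psi'}_v$ by $p\mapsto\gamma_v^{-1}p$, and on the oriented edge $e=(u,v)$ define $\beta_e\colon\torsor^\psi_e\to\torsor^{\psi'}_e$ by $q\mapsto\gamma_u^{-1}q$; each map is $G$-equivariant for the right action, and the law $\psi'_{uv}=\gamma_u^{-1}\psi_{uv}\gamma_v$ is precisely what makes $\beta$ commute with the restriction maps at both ends of $e$, so $\beta$ is an isomorphism of network $G$-torsors. It therefore induces an isomorphism of associated vector sheaves $\mathcal{E}^\psi\to\mathcal{E}^{\psi'}$ via $[p,w]\mapsto[\beta_x(p),w]$, hence a bijection on global sections, and this bijection sends the section with $\sigma_v=[1_G, f_v]$ to the section with $\sigma'_v=[\gamma_v^{-1}, f_v]=[1_G, \rho(\gamma_v)^{-1}f_v]$. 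Thus the representative of the \emph{same} section in the $\psi'$-gauge is $f'_v=\rho(\gamma_v)^{-1}f_v$. That $f'$ solves the $\psi'$-synchronization equation is then immediate from $\rho$ being a homomorphism:
\[
f'_u=\rho(\gamma_u)^{-1}f_u=\rho(\gamma_u)^{-1}\rho(\psi_{uv})\rho(\gamma_v)f'_v=\rho\!\left(\gamma_u^{-1}\psi_{uv}\gamma_v\right)f'_v=\rho(\psi'_{uv})f'_v.
\]

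The only genuine subtlety I anticipate is making ``the same section'' precise, i.e.\ verifying that $\beta$ is a well-defined isomorphism of network torsors and therefore of the associated sheaves (the equivalence-relation bookkeeping, together with the harmless orientation choice used to coordinatize the edge stalks, which does not affect the vertex-level formula $f'_v=\rho(\gamma_v)^{-1}f_v$); everything else is direct substitution into the definitions.
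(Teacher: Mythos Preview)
Your proposal is correct and mirrors the paper's argument: normalize each class in $\mathcal{E}_x$ to its unique $[1_G,w]$ representative, unwind the restriction maps on an oriented edge to obtain $f_u=\rho(\psi_{uv})f_v$, and for gauge-invariance derive $f'_v=\rho(\gamma_v)^{-1}f_v$ and verify it satisfies the $\psi'$-synchronization equation. Your handling of the gauge step is slightly more structural---you build the torsor isomorphism $\beta$ explicitly (which the paper proves separately in Appendix~\ref{app:network-torsor}) and push it forward to $\mathcal{E}$---whereas the paper simply writes $\sigma_v=[1_G,f_v]=[\gamma_v,f'_v]$ and reads off the transformation, but the content is identical.
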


\noindent The proof is provided in Appendix~\ref{app:proof of prop_sections}. See also Figure~\ref{fig:placeholder} for a visual representation. The proposition establishes that a global section of the vector sheaf $\mathcal{E}$ is equivalent to a perfectly synchronized vector assignment $f$. However, an arbitrary function on the graph's vertices will not, in general, satisfy this strict condition. We can quantify the extent to which any given feature assignment $f: V \rightarrow F$ deviates from being a true feature field by measuring its total inconsistency across all edges. This is accomplished with a \emph{frustration loss} functional (see Appendix~\ref{app:proof of frustration} for a proof).

\begin{corollary}[The Frustration Functional]
\label{cor:frustration}
Assume $\rho$ is orthogonal and $f:V\to F$; define
\[
\eta_F(f;X,\psi)\;:=\;\frac{1}{\mathrm{vol}(X)}\sum_{\{u,v\}\in E}\bigl\|f_u-\rho(\psi_{uv})f_v\bigr\|^2,
\qquad \mathrm{vol}(X):=\sum_{v\in V}\deg(v)=2|E|.
\]
Then $\eta_F(f;X,\psi)=0$ if and only if $f$ represents a global section, and $\eta_F$ is gauge-invariant.
\end{corollary}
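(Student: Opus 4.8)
The plan is to derive both claims directly from Proposition~\ref{prop:sections-equals-sync}, which already does the heavy lifting, so that this corollary is essentially a repackaging. First I would establish the ``zero iff global section'' claim. Since each summand $\|f_u - \rho(\psi_{uv})f_v\|^2$ is nonnegative and $\mathrm{vol}(X) = 2|E| > 0$ (assuming no isolated vertices, or more precisely that $E \neq \emptyset$), the sum vanishes if and only if every summand vanishes, i.e. $f_u = \rho(\psi_{uv})f_v$ for every edge $\{u,v\}\in E$. By Proposition~\ref{prop:sections-equals-sync}, this is exactly the feature synchronization condition, which is in canonical bijection with feature fields $\Gamma(X,\mathcal{E})$ via $\sigma_v = [1_G, f_v]$. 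Hence $\eta_F(f;X,\psi) = 0$ iff $f$ represents a global section. One subtlety worth a remark: the expression $\|f_u - \rho(\psi_{uv})f_v\|^2$ is written for an oriented edge, so I should check it is well-defined on the \emph{unoriented} edge $\{u,v\}$; this is where orthogonality of $\rho$ enters, since $\|f_v - \rho(\psi_{vu})f_u\|^2 = \|\rho(\psi_{uv})^{-1}(\rho(\psi_{uv})f_v - f_u)\|^2 = \|f_u - \rho(\psi_{uv})f_v\|^2$ using $\psi_{vu} = \psi_{uv}^{-1}$ and that $\rho(\psi_{uv})$ is an isometry.

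Next I would prove gauge-invariance. Fix a gauge transformation $\gamma:V\to G$, so $\psi'_{uv} = \gamma_u^{-1}\psi_{uv}\gamma_v$ and, by Proposition~\ref{prop:sections-equals-sync}, the corresponding feature representation transforms as $f'_v = \rho(\gamma_v)^{-1}f_v$. The key computation is on a single edge: substituting,
\[
f'_u - \rho(\psi'_{uv})f'_v = \rho(\gamma_u)^{-1}f_u - \rho(\gamma_u^{-1}\psi_{uv}\gamma_v)\rho(\gamma_v)^{-1}f_v = \rho(\gamma_u)^{-1}\bigl(f_u - \rho(\psi_{uv})f_v\bigr),
\]
using that $\rho$ is a homomorphism. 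Taking norms and invoking orthogonality of $\rho$ (so $\rho(\gamma_u)^{-1}$ preserves $\|\cdot\|$) gives $\|f'_u - \rho(\psi'_{uv})f'_v\|^2 = \|f_u - \rho(\psi_{uv})f_v\|^2$ for each edge. Since the normalization $\mathrm{vol}(X) = 2|E|$ depends only on the underlying graph $X$ and not on $\psi$ or the gauge, summing over edges yields $\eta_F(f';X,\psi') = \eta_F(f;X,\psi)$, which is the claimed gauge-invariance.

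I do not anticipate a genuine obstacle here — every step is a short algebraic manipulation — so the main thing to be careful about is bookkeeping: being explicit that orthogonality of $\rho$ is used in exactly two places (orientation-independence of each edge term, and invariance under the left-translation $\rho(\gamma_u)^{-1}$), and noting that the $1/\mathrm{vol}(X)$ factor is inert under gauge transformations because it is a purely combinatorial quantity. If one wanted to be maximally careful, one could also observe that the bijection of Proposition~\ref{prop:sections-equals-sync} is what licenses phrasing the first claim in terms of ``$f$ represents a global section'' rather than merely ``$f$ solves feature synchronization,'' so the corollary statement is consistent with the terminology set up there.
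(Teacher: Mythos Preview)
Your proposal is correct and follows essentially the same route as the paper's proof: nonnegativity of summands plus Proposition~\ref{prop:sections-equals-sync} for the first claim, and the identity $f'_u-\rho(\psi'_{uv})f'_v=\rho(\gamma_u)^{-1}(f_u-\rho(\psi_{uv})f_v)$ together with orthogonality for gauge-invariance. The only addition is your orientation-independence check for the unoriented edge term, which the paper leaves implicit but is a worthwhile remark.
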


\noindent The gauge-invariance ensures we measure an intrinsic property of the feature field, not a coordinate artifact. In practice, the frustration loss serves as a regularization term: adding it to the baseline model's loss encourages learning feature representations consistent with the geometry of the underlying geometric domain of the data.

\subsection{Torsor Convolutional Layers}

A Torsor Convolutional Layer is a linear map $\Phi$ on feature assignments
$F^V$ that is \emph{gauge-equivariant} and preserves the subspace of global sections $\Gamma$.
If you transform the input feature field by a gauge transformation $\gamma$, the output of the layer is the transformed version of the original output. The layer works as follows: to compute the new feature at vertex $v$, we gather features from all neighboring vertices $u$, use the edge potentials $\psi$ to transport them into $v$'s local frame, apply a shared kernel, and then aggregate the results.

\begin{definition}
\label{def:torsor_conv_layer}
Let $\mathcal{E}_{\text{in}} = \torsor^\psi \times_{\rho_{\text{in}}} F_{\text{in}}$ and 
$\mathcal{E}_{\text{out}} = \torsor^\psi \times_{\rho_{\text{out}}} F_{\text{out}}$.
A \emph{Torsor Convolution Layer} is a gauge-equivariant linear map
$\Phi: F_{\text{in}}^{V} \longrightarrow F_{\text{out}}^{V}$
between feature assignments on vertices (not necessarily global sections),
parameterized by a learnable $G$-equivariant intertwiner $K: F_{\text{in}} \to F_{\text{out}}$ satisfying
\[
K(\rho_{\text{in}}(g)w)\;=\;\rho_{\text{out}}(g)\,K(w)\quad(\forall\,g\in G,\;w\in F_{\text{in}})
\]
(learned within the commutant of $\rho_{\text{in}}$ and $\rho_{\text{out}}$).
Given $f_{\text{in}}:V\to F_{\text{in}}$, the output $f_{\text{out}}=\Phi(f_{\text{in}})$ is
\[
f_{\text{out}}(v)\;=\;
\begin{cases}
\displaystyle \frac{1}{c_v}\sum_{u\sim v} w_{uv}\,
K\!\big(\rho_{\text{in}}(\psi_{uv})^{-1}\,f_{\text{in}}(u)\big),
& \text{if } c_v:=\sum_{u\sim v} w_{uv} > 0,\\[1.25em]
K\!\big(f_{\text{in}}(v)\big), & \text{if } c_v=0 \text{ (isolated $v$).}
\end{cases}
\]
Here $w_{uv}>0$ are optional edge weights (default $w_{uv}\equiv 1$).
The term $\rho_{\text{in}}(\psi_{uv})^{-1}f_{\text{in}}(u)$ transports the feature from $u$ into $v$'s local frame before applying $K$.
The normalization by $c_v$ ensures that, when restricted to global sections, the output is also a global section.
\end{definition}

\begin{remark}[Global Sections Preserved]
If $f_{\text{in}}\in\Gamma(X,\mathcal{E}_{\text{in}})$ satisfies
$f_{\text{in}}(u)=\rho_{\text{in}}(\psi_{uv})\,f_{\text{in}}(v)$ for all edges $\{u,v\}$,
then for every neighbor $u\sim v$,
\[
K\!\big(\rho_{\text{in}}(\psi_{uv})^{-1} f_{\text{in}}(u)\big)
=K\!\big(f_{\text{in}}(v)\big).
\]
Hence for $c_v>0$,
\[
f_{\text{out}}(v)=\frac{1}{c_v}\sum_{u\sim v} w_{uv}\,K(f_{\text{in}}(v)) = K(f_{\text{in}}(v)),
\]
and for isolated $v$ we set $f_{\text{out}}(v)=K(f_{\text{in}}(v))$ by definition.
Therefore $f_{\text{out}}(u)=\rho_{\text{out}}(\psi_{uv})\,f_{\text{out}}(v)$ for all edges, i.e. $f_{\text{out}}\in\Gamma(X,\mathcal{E}_{\text{out}})$.\footnote{A more expressive formulation replaces the single intertwiner $K$ by an edge-dependent kernel $\kappa(\psi_{uv})$ satisfying a bi-equivariance law.} 
\end{remark}

\begin{proposition}[Gauge Equivariance]\label{prop:gauge-eq-tor}
The torsor convolution layer is gauge-equivariant and preserves global sections. A proof is given in Appendix~\ref{app:proof of cnn layer}.
\end{proposition}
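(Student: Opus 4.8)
The plan is to check three properties of $\Phi$ in sequence: linearity, preservation of global sections, and gauge-equivariance; only the last requires any real work, and even that reduces to a one-line cancellation. For linearity I would simply note that the intertwiner $K$ is a linear map by definition and that $f_{\text{out}}(v)$ is a fixed linear combination (with coefficients $w_{uv}/c_v$ that do not depend on $f_{\text{in}}$) of the vectors $K(\rho_{\text{in}}(\psi_{uv})^{-1}f_{\text{in}}(u))$, the isolated-vertex value $K(f_{\text{in}}(v))$ being equally linear. For preservation of global sections I would invoke the Remark following Definition~\ref{def:torsor_conv_layer}: when $f_{\text{in}}$ is $\rho_{\text{in}}$-synchronized, every transported term equals $K(f_{\text{in}}(v))$, the weighted average collapses to $K(f_{\text{in}}(v))$, and this assignment satisfies the $\rho_{\text{out}}$-synchronization condition — nothing beyond that remark is needed.

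The substance is gauge-equivariance. I would fix a gauge transformation $\gamma:V\to G$, recall that it sends $\psi_{uv}\mapsto\psi'_{uv}=\gamma_u^{-1}\psi_{uv}\gamma_v$, input fields to $f'_{\text{in}}(v)=\rho_{\text{in}}(\gamma_v)^{-1}f_{\text{in}}(v)$, and output fields to $f'_{\text{out}}(v)=\rho_{\text{out}}(\gamma_v)^{-1}f_{\text{out}}(v)$, and then establish $\Phi^{\psi'}(f'_{\text{in}})=(\Phi^{\psi}f_{\text{in}})'$, where $\Phi^{\psi}$ denotes the layer built from $\psi$. The key computation is to expand $\rho_{\text{in}}(\psi'_{uv})^{-1}=\rho_{\text{in}}(\gamma_v)^{-1}\rho_{\text{in}}(\psi_{uv})^{-1}\rho_{\text{in}}(\gamma_u)$ using that $\rho_{\text{in}}$ is a homomorphism, and to combine it with $f'_{\text{in}}(u)=\rho_{\text{in}}(\gamma_u)^{-1}f_{\text{in}}(u)$, so that the $\rho_{\text{in}}(\gamma_u)$ factors cancel and
\[
\rho_{\text{in}}(\psi'_{uv})^{-1}f'_{\text{in}}(u)=\rho_{\text{in}}(\gamma_v)^{-1}\,\rho_{\text{in}}(\psi_{uv})^{-1}f_{\text{in}}(u).
\]
Applying $K$ and using the intertwiner law $K(\rho_{\text{in}}(g)w)=\rho_{\text{out}}(g)K(w)$ with $g=\gamma_v^{-1}$ then moves $\rho_{\text{out}}(\gamma_v)^{-1}$ outside $K$, hence outside the $u$-sum, giving $\Phi^{\psi'}(f'_{\text{in}})(v)=\rho_{\text{out}}(\gamma_v)^{-1}\Phi^{\psi}(f_{\text{in}})(v)$. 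Since the normalizer $c_v=\sum_{u\sim v}w_{uv}$ depends only on the edge weights, both it and the case split $c_v>0$ versus $c_v=0$ are unaffected by $\gamma$, so the isolated-vertex case is the same computation without the sum.

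I do not expect a genuine obstacle here: the proof is bookkeeping. The two points to get right are (i) the cancellation above — the inverse must be tracked carefully so that the $\gamma_u$ introduced by $\psi'_{uv}$ meets the $\gamma_u^{-1}$ in $f'_{\text{in}}(u)$, leaving a single $v$-indexed factor that the intertwiner law can push through $K$; and (ii) observing that $c_v$ is gauge-independent, so no separate case analysis is required. I would also remark that since the update is computed edge-by-edge with the oriented form $\psi_{uv}$ used consistently on each $(u,v)$, the antisymmetry $\psi_{uv}=\psi_{vu}^{-1}$ makes the output independent of the orientations chosen to write the formulas, matching the orientation-independence already noted for global sections.
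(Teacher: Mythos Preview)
Your proposal is correct and follows essentially the same route as the paper: the core gauge-equivariance argument is the same cancellation $\rho_{\text{in}}(\psi'_{uv})^{-1}f'_{\text{in}}(u)=\rho_{\text{in}}(\gamma_v)^{-1}\rho_{\text{in}}(\psi_{uv})^{-1}f_{\text{in}}(u)$ followed by the intertwiner law to pull $\rho_{\text{out}}(\gamma_v)^{-1}$ through $K$, with the isolated-vertex case handled separately. Your additional remarks on linearity, the gauge-independence of $c_v$, and orientation-independence are sound bookkeeping; the paper's proof simply omits them and defers global-section preservation to the preceding Remark, exactly as you do.
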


\begin{remark}[Equivariant nonlinearities]
While torsor convolutional layers provide the linear part of the architecture,
nonlinear activations must also preserve equivariance. In general, if features
transform according to a representation $\rho:G\to\GL(F)$, then any nonlinearity
$\sigma:F\to F$ must satisfy $\sigma(\rho(g)f)=\rho(g)\sigma(f)$ for all $g\in G$. 
For regular representations (where $G$
G acts on itself by permutation), pointwise nonlinearities like ReLU automatically satisfy this constraint.  However, for general irreducible representations—particularly of groups like $SO(n), SE(n)$—the equivariance constraint severely restricts allowed nonlinearities.  Valid constructions include norm-based activations that apply the nonlinearity only to the norm while preserving direction: $f \mapsto \sigma(\|f\|)\,\frac{f}{\|f\|}$
\citep{worrall2017harmonic, weiler2019general}, tensor product nonlinearities \citep{kondor2018nbody}, or gated nonlinearities where scalar fields modulate vector fields \citep{weiler2018steerable}. 
\end{remark}

\begin{remark}[Reduction to Known Architectures]
\label{rem:reduction}

\noindent\textbf{Classical CNNs on grids.}
    On a 2D grid viewed as a graph with $G=\mathbb{Z}^2$ (translations) and trivial representations, the torsor layer with a single intertwiner $K$ yields translation equivariance and location-wise weight sharing. It does \emph{not} reproduce position-selective filter taps of standard discrete convolution.%
    \footnote{Full recovery of classical position-selective filters can be obtained by replacing the single intertwiner $K$ with an edge-dependent kernel satisfying a bi-equivariance law; we omit this for space.}

\noindent\textbf{$G$-CNNs on homogeneous spaces.}
    For discretized homogeneous spaces $G/H$ with structure group $H$, the layer implements $H$-equivariant steering via $\rho(\psi_{uv})^{-1}$ and location-wise sharing. This captures the usual weight sharing across the domain; offset selectivity would again require the omitted edge-dependent kernel.

\noindent\textbf{Gauge-CNNs on manifolds.}
    On meshes with structure group $\SO(d)$, $\psi_{uv}\in\SO(d)$ encodes discrete parallel transport. The present layer provides gauge-equivariant weight sharing; richer dependence on relative orientations is possible with the omitted edge-dependent kernel. 
\end{remark}

\section{Discussion and Conclusion}
\label{sec:conc}

We introduced Torsor CNNs, a framework for learning on graphs with local symmetries encoded as edge potentials. The key insight—that gauge-equivariant learning and group synchronization are equivalent—yields both theoretical understanding and practical tools.
\paragraph{Practical Validation. } In Appendix~\ref{sec:practical}, we demonstrate the framework on ModelNet40 multi-view recognition. Camera poses provide natural $\text{SO}(3)$ edge potentials between views. We show two implementations: (A) Torsor CNN layers that explicitly transport features between camera frames before aggregation, and (B) standard multi-view networks (MVCNN, EMVN) augmented with frustration regularization. The frustration loss encourages view features to satisfy $f_i = \rho(\psi_{ij})f_j$ without architectural changes, reducing intra-class variance for improved retrieval mAP.

\paragraph{Future Work.} While we assumed a fixed group $G$ throughout, the framework naturally extends to heterogeneous settings, where different nodes could have different structure groups, for example, molecular graphs where single bonds allow $SO(2)$ rotations while double bonds have discrete $\mathbb{Z}_2$ symmetry. Another important direction is the development of standardized implementations: both torsor convolutional layers and frustration regularization should be distilled into practical and reusable modules.

 \paragraph{Conclusion.} Torsor CNNs provide a principled way to incorporate local geometric structure into graph learning. The frustration loss offers an immediate path to geometric regularization for any neural network, while the theoretical framework unifies CNNs, G-CNNs, and gauge CNNs as special cases of a general discrete theory. 


\bibliographystyle{plainnat}

\bibliography{reference}

\appendix

\section{Formalism of Network Torsors}
\label{app:network-torsor}

This appendix provides the formal definition of a network $G$-torsor and a verification that the construction from an edge potential (Definition~\ref{def:network-torsor-from-potential}) is a valid instance of this structure.

\begin{definition}
A \textbf{network $G$-torsor} $\torsor$ on a graph $X$ is a network sheaf satisfying two conditions:
\begin{enumerate}
\item The stalks $\torsor_v$ (for $v \in V$) and $\torsor_e$ (for $e \in E$) are all $G$-torsors.
\item The restriction maps $\torsor_{v \to e}: \torsor_v \to \torsor_e$ are $G$-equivariant. That is, for any $p \in \torsor_v$ and $g \in G$, the map respects the right group action:
\[ \torsor_{v\to e}(p \cdot g) = \torsor_{v\to e}(p) \cdot g \]
\end{enumerate}
\end{definition}

\begin{proposition}
    The network torsor from an edge potential from Definition~\ref{def:network-torsor-from-potential} is a network $G$-torsor. 
\end{proposition}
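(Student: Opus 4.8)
The plan is to verify the two defining conditions of a network $G$-torsor directly against the construction in Definition~\ref{def:network-torsor-from-potential}, treating the edge potential's antisymmetry as the only nontrivial ingredient.

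First I would check condition (1), that every stalk is a $G$-torsor. By construction $\torsor^{\psi}_v = \torsor^{\psi}_e = G$ with the right action of $G$ on itself by multiplication. This action is free since $p\cdot g = p$ forces $g = 1_G$ by cancellation in $G$, and transitive since for any $p,q\in G$ the element $g = p^{-1}q$ satisfies $p\cdot g = q$, uniquely so by the same cancellation. Hence each stalk is a $G$-torsor.

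Next I would check condition (2), $G$-equivariance of the restriction maps. Fix an oriented edge $e=(u,v)$. The map $\torsor^{\psi}_{u\to e}\colon p\mapsto p$ is the identity and trivially intertwines the right action. The map $\torsor^{\psi}_{v\to e}\colon p\mapsto \psi_{uv}\,p$ is left translation by $\psi_{uv}$, and associativity of the group product gives
\[
\torsor^{\psi}_{v\to e}(p\cdot g) = \psi_{uv}\,(p\,g) = (\psi_{uv}\,p)\,g = \torsor^{\psi}_{v\to e}(p)\cdot g,
\]
for all $p,g\in G$, so both restriction maps are $G$-equivariant (and, being bijections, are isomorphisms of $G$-torsors).

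Finally I would address the one point that actually requires care: the formulas are written relative to a chosen orientation of each edge, so one must confirm that the sheaf structure is independent of this choice — this is precisely where $\psi_{uv}=\psi_{vu}^{-1}$ is used. Reversing the orientation of $e$ replaces the pair $(\torsor^{\psi}_{u\to e},\torsor^{\psi}_{v\to e}) = (\mathrm{id},\ \psi_{uv}\cdot)$ by $(\psi_{vu}\cdot,\ \mathrm{id})$; post-composing the first pair with the torsor automorphism of $\torsor^{\psi}_e$ given by left translation by $\psi_{vu}$ yields $(\psi_{vu}\cdot,\ \psi_{vu}\psi_{uv}\cdot) = (\psi_{vu}\cdot,\ \mathrm{id})$ since $\psi_{vu}\psi_{uv}=1_G$, so the two orientations define the same network $G$-torsor up to a canonical isomorphism of the edge stalk. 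I do not expect any genuine obstacle here — every step reduces to cancellation and associativity in $G$ — and the only thing to be careful about is this orientation bookkeeping; the substantive consequences of the construction (e.g. the identification of global sections with solutions of feature synchronization in Proposition~\ref{prop:sections-equals-sync}) are established separately and are not needed for this verification.
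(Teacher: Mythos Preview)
Your proposal is correct and follows essentially the same route as the paper: verify that $G$ with right multiplication is a $G$-torsor, then check $G$-equivariance of the two restriction maps (identity trivially, left translation via associativity). Your additional paragraph on orientation independence goes beyond what the paper's proof includes, but it is sound and harmless.
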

\begin{proof}
We now formally state and prove that the construction from Definition~\ref{def:network-torsor-from-potential} satisfies this definition.

The first condition requires that the stalks be $G$-torsors. By construction, every stalk in $\torsor^\psi$ is the group $G$ itself. A group $G$ forms a canonical $G$-torsor by acting on itself with right multiplication. This action is free and transitive, thus satisfying the definition of a $G$-torsor. 

Second, condition is verified by checking the $G$-equivariance of the restriction maps for an edge $e=(u,v)$. The map from the source vertex $u$, $\torsor^\psi_{u \to e}(p) = p$, is the identity and thus trivially equivariant. The map from the target vertex $v$, $\torsor_{v \to e}(p) = \psi_{uv} \cdot p$, is equivariant due to the associativity of the group operation:
\[ \torsor^\psi_{v \to e}(p \cdot g) = \psi_{uv} \cdot (p \cdot g) = (\psi_{uv} \cdot p) \cdot g = \torsor^\psi_{v \to e}(p) \cdot g \]
Since both conditions hold, the construction $\torsor^{\psi}$ yields a valid network $G$-torsor.
\end{proof}

\begin{definition}
A morphism $\phi:\mathcal P\to\mathcal Q$  is a collection of \textbf{$G$-equivariant maps} 
\[\{\phi_v:\mathcal P_v\to\mathcal Q_v\}_{v\in V} \quad \text{ and } \quad\{\phi_e:\mathcal P_e\to\mathcal Q_e\}_{e\in E}\] such that for every incidence $v\in e$ the diagram commutes:
\[\begin{tikzcd}
    {\mathcal{P}_v} & {\mathcal{Q}_v} \\
    {\mathcal{P}_e} & {\mathcal{Q}_e}
    \arrow["{\phi_v}", from=1-1, to=1-2]
    \arrow["{\mathcal{P}_{v \to e}}"', from=1-1, to=2-1]
    \arrow["{\mathcal{Q}_{v \to e}}", from=1-2, to=2-2]
    \arrow["{\phi_e}"', from=2-1, to=2-2]
\end{tikzcd}.\]
We call $\phi$ an isomorphism if each $\phi_v$ and $\phi_e$ is an isomorphism.
\end{definition}

\begin{proposition}
Let $\psi$ and $\psi'$ be gauge-equivalent edge potentials related by a gauge transformation $\gamma:V \to G$, such that $\psi'_{uv} = \gamma_u^{-1} \psi_{uv} \gamma_v$ for all edges $e=(u,v)$. Then the network torsors $\torsor^\psi$ and $\torsor^{\psi'}$ are isomorphic. 
\end{proposition}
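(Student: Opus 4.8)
The plan is to construct an explicit isomorphism $\phi:\torsor^\psi \to \torsor^{\psi'}$ using the gauge transformation $\gamma$, checking that it is $G$-equivariant on each stalk and that it intertwines the restriction maps. Both torsors have every stalk equal to $G$ (as a right $G$-torsor), so the content is entirely in matching up the restriction maps, which differ by the conjugation $\psi'_{uv} = \gamma_u^{-1}\psi_{uv}\gamma_v$.

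Concretely, I would define the vertex maps by left multiplication, $\phi_v : G \to G$, $\phi_v(p) = \gamma_v^{-1} p$. Since left multiplication commutes with right multiplication, $\phi_v(p\cdot g) = \gamma_v^{-1}pg = \phi_v(p)\cdot g$, so each $\phi_v$ is $G$-equivariant (and clearly bijective, with inverse $p \mapsto \gamma_v p$). For the edge maps I need to choose carefully so the squares commute. For an oriented edge $e=(u,v)$, I would set $\phi_e : G \to G$, $\phi_e(q) = \gamma_u^{-1} q$ — that is, the edge map is dictated by the source vertex $u$. Then I verify the two commuting squares for the incidences $u\in e$ and $v \in e$ separately: for $u$, $\phi_e(\torsor^\psi_{u\to e}(p)) = \phi_e(p) = \gamma_u^{-1}p$ while $\torsor^{\psi'}_{u\to e}(\phi_u(p)) = \phi_u(p) = \gamma_u^{-1}p$, so they agree. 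For $v$, $\phi_e(\torsor^\psi_{v\to e}(p)) = \phi_e(\psi_{uv}p) = \gamma_u^{-1}\psi_{uv}p$, while $\torsor^{\psi'}_{v\to e}(\phi_v(p)) = \psi'_{uv}\,\phi_v(p) = (\gamma_u^{-1}\psi_{uv}\gamma_v)(\gamma_v^{-1}p) = \gamma_u^{-1}\psi_{uv}p$, and these agree as well.

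The main thing to be careful about — and the only real subtlety — is the orientation-independence of the edge map definition: I defined $\phi_e$ via the source vertex of a chosen orientation $(u,v)$, so I should check that choosing the reversed orientation $(v,u)$ gives the same map. Using $\psi_{vu} = \psi_{uv}^{-1}$ and $\psi'_{vu} = (\psi'_{uv})^{-1} = \gamma_v^{-1}\psi_{uv}^{-1}\gamma_u$, the reversed construction would set $\phi_e(q) = \gamma_v^{-1} q$ with restriction maps swapped; running the same two square-checks shows the commutativity still holds with this alternative $\phi_e$, but it is a genuinely different map unless one is consistent. The clean fix is to fix once and for all an orientation of each edge (as the formulas in Definition~\ref{def:network-torsor-from-potential} already implicitly require) and define $\phi_e$ relative to that orientation; then everything is well-defined and the antisymmetry conditions guarantee that the resulting structure does not depend on the choice. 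I expect this bookkeeping to be the only obstacle; the equivariance and commutativity verifications themselves are one-line group-associativity computations, and bijectivity of each $\phi_v,\phi_e$ is immediate since left multiplication by a group element is invertible.
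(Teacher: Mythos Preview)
Your proposal is correct and essentially identical to the paper's own proof: the paper defines $\phi_v(p)=\gamma_v^{-1}p$, $\phi_e(p)=\gamma_u^{-1}p$ (using the source vertex of the chosen orientation), checks the two commuting squares exactly as you do, and then notes equivariance and bijectivity of left multiplication. Your additional remark about orientation-dependence of $\phi_e$ is a point the paper simply leaves implicit by fixing an orientation per edge, as you suggest.
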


\begin{proof}
We construct an explicit isomorphism of network sheaves, $\phi: \torsor^\psi \to \torsor^{\psi'}$. The isomorphism is defined on the stalks. For each vertex $v \in V$, we define the map $\phi_v:\torsor^\psi_v \to \torsor^{\psi'}_v$ by left multiplication:
\[ \phi_v(p) = \gamma_v^{-1} \cdot p \]
For each oriented edge $e=(u,v)$, we define the map on the edge stalk $\phi_e:\torsor^\psi_{e} \to \torsor^{\psi'}_e$ similarly, using the source vertex's transformation:
\[ \phi_e(p) = \gamma_u^{-1} \cdot p \]
To show that $\phi$ is a valid morphism of network sheaves, we must verify that the diagram of restriction maps commutes for every edge. In other words,
\[\begin{tikzcd}
    {\mathcal{P}^{\psi}_u} & {\mathcal{P}^{\psi'}_u} \\
    {\mathcal{P}^{\psi}_e} & {\mathcal{P}^{\psi'}_e}
    \arrow["{\phi_u}", from=1-1, to=1-2]
    \arrow["{\mathcal{P}^{\psi}_{u \to e}}"', from=1-1, to=2-1]
    \arrow["{\mathcal{P}^{\psi'}_{u \to e}}", from=1-2, to=2-2]
    \arrow["{\phi_e}"', from=2-1, to=2-2]
\end{tikzcd}\]
Consider an oriented edge $e=(u,v)$. For the source vertex $u$, the path through $\torsor^\psi_u \to \torsor^\psi_{e} \to \torsor^{\psi'}_e$ maps an element $p$ to $\phi_e(p) = \gamma_u^{-1} \cdot p$. The path through $\torsor^\psi_u \to \torsor^{\psi'}_u \to \torsor^{\psi'}_e$ maps $p$ to $\torsor_{u \to e}^{\psi'}(\phi_u(p)) = \torsor_{u \to e}^{\psi'}(\gamma_u^{-1} \cdot p) = \gamma_u^{-1} \cdot p$. The diagram commutes for the source vertex.\\
For the target vertex $v$, the path through $\torsor^\psi_v \to \torsor^\psi_{e} \to \torsor^{\psi'}_e$ maps an element $p$ to $\phi_e(\torsor_{v \to e}^\psi(p)) = \phi_e(\psi_{uv} \cdot p) = \gamma_u^{-1} \cdot (\psi_{uv} \cdot p)$. The path through $\torsor^\psi_v \to \torsor^{\psi'}_v \to \torsor^{\psi'}_e$ maps $p$ to $\torsor_{v \to e}^{\psi'}(\phi_v(p)) = \torsor_{v \to e}^{\psi'}(\gamma_v^{-1} \cdot p) = \psi_{uv}' \cdot (\gamma_v^{-1} \cdot p)$. Substituting the definition of $\psi_{uv}'$:
\[ \psi_{uv}' \cdot (\gamma_v^{-1} \cdot p) = (\gamma_u^{-1} \psi_{uv} \gamma_v) \cdot (\gamma_v^{-1} \cdot p) = \gamma_u^{-1} \psi_{uv} (\gamma_v \gamma_v^{-1}) p = \gamma_u^{-1} \cdot (\psi_{uv} \cdot p) \]
The diagram also commutes for the target vertex. Thus, $\phi$ is a morphism of network sheaves.
Finally, each map $\phi_v$ is a $G$-equivariant bijection. It is a bijection because left multiplication is invertible. It is $G$-equivariant because for any $g \in G$, $\phi_v(p \cdot g) = \gamma_v^{-1} \cdot (p \cdot g) = (\gamma_v^{-1} \cdot p) \cdot g = \phi_v(p) \cdot g$. The same holds for $\phi_e$. Therefore, $\phi$ is an isomorphism of network $G$-torsors.
\end{proof}

\section{Relegated Proofs}
\label{sec:proofs}

\subsection{Proof of Proposition~\ref{prop:sections-equals-sync}} \label{app:proof of prop_sections}

\begin{proof}
We work with the edge potential $\psi$ which corresponds to the identity gauge, where the identity element $1_G \in G$ serves as the implicit reference frame at each vertex.
Given a section $\sigma$ of the associated vector sheaf $\mathcal{E}$, we define a function $f:V \to F$ via $\sigma_v = [1_G, f_v]$. This gives a bijection between sections and synchronized functions.
For any edge $e=(u,v)$, the section compatibility condition $\mathcal{E}_{u \to e}(\sigma_u) = \mathcal{E}_{v \to e}(\sigma_v)$ holds if and only if:
\begin{align*}
\mathcal{E}_{u \to e}(\sigma_u) = \mathcal{E}_{v \to e}(\sigma_v) &\iff [\torsor^\psi_{u \to e}(1_G), f_u] = [\torsor^\psi_{v \to e}(1_G), f_v] \\
&\iff [1_G, f_u] = [\psi_{uv}, f_v] \\
&\iff [1_G, f_u] = [1_G, \rho(\psi_{uv})f_v] \\
&\iff f_u = \rho(\psi_{uv})f_v
\end{align*}
The first equivalence applies the definition of the restriction maps on $\mathcal{E}$. The second uses the restriction maps from Definition~\ref{def:network-torsor-from-potential}: $\torsor_{u \to e}^\psi(1_G) = e$ and $\torsor_{v \to e}^\psi(1_G) = \psi_{uv} \cdot 1_G = \psi_{uv}$. The third applies the equivalence relation $(p \cdot g, w) \sim (p, \rho(g)w)$ with $p=1_G$ and $g=\psi_{uv}$. The final step follows from the uniqueness of representation in the fiber $\mathcal{E}_e$. This establishes the bijection.

Consider a gauge transformation $\gamma:V \to G$ which transforms the edge potential to $\psi_{uv}' = \gamma_u^{-1} \psi_{uv} \gamma_v$. In the new gauge, the identity elements are replaced by $\gamma_v$ at each vertex $v$.
A section $\sigma$ that was represented by $f$ in the original (identity) gauge is now represented by $f'$ in the new gauge. Since $\sigma_v = [e, f_v] = [\gamma_v, f'_v]$ (the same element of $\mathcal{E}_v$ expressed in different coordinates), the equivalence relation gives us $f_v = \rho(\gamma_v)f'_v$, or equivalently, $f'_v = \rho(\gamma_v)^{-1}f_v$.
We verify that $f'$ satisfies the synchronization condition with respect to $\psi_{uv}'$:
\begin{align*}
f_u = \rho(\psi_{uv})f_v &\implies \rho(\gamma_u)f'_u = \rho(\psi_{uv})\rho(\gamma_v)f'_v \\
&\implies f'_u = \rho(\gamma_u)^{-1}\rho(\psi_{uv})\rho(\gamma_v)f'_v \\
&\implies f'_u = \rho(\gamma_u^{-1}\psi_{uv}\gamma_v)f'_v \\
&\implies f'_u = \rho(\psi_{uv}')f'_v
\end{align*}
Thus, the transformed function $f'$ satisfies the synchronization condition with the transformed edge potential $\psi_{uv}'$, confirming that the correspondence between sections and synchronized functions is gauge-independent. 
\end{proof}

\subsection{Proof of Corollary~\ref{cor:frustration}}\label{app:proof of frustration}

\begin{proof}
For consistency detection: each term $\|f_u-\rho(\psi_{uv})f_v\|^2$ is nonnegative, hence $\eta_F(f;X,\psi)=0$ iff all edge residuals vanish, i.e., $f_u=\rho(\psi_{uv})f_v$ for all $\{u,v\}\in E$, which is equivalent to $f$ representing a global section by Proposition~\ref{prop:sections-equals-sync}.
For gauge invariance: let $\gamma:V\to G$, and define $\psi'_{uv}=\gamma_u^{-1}\psi_{uv}\gamma_v$, $f'_v=\rho(\gamma_v)^{-1}f_v$. Then
\[
f'_u-\rho(\psi'_{uv})f'_v
=\rho(\gamma_u)^{-1}\bigl(f_u-\rho(\psi_{uv})f_v\bigr).
\]
Since $\rho$ is orthogonal, $\|\rho(\gamma_u)^{-1}w\|=\|w\|$ for all $w$, so each edge residual norm is unchanged. The sum and $\mathrm{vol}(X)$ are gauge-independent, hence $\eta_F(f';X,\psi')=\eta_F(f;X,\psi)$.
\end{proof}

\subsection{Proof of Proposition~\ref{prop:gauge-eq-tor}}\label{app:proof of cnn layer}

\begin{proof}
We must show that under a gauge transformation $\gamma:V \to G$, if the input transforms as $f'_{\text{in}}(v) = \rho_{\text{in}}(\gamma_v)^{-1}f_{\text{in}}(v)$ and the edge potential transforms as $\psi'_{uv} = \gamma_u^{-1}\psi_{uv}\gamma_v$, then $f'_{\text{out}}(v) = \rho_{\text{out}}(\gamma_v)^{-1}f_{\text{out}}(v)$.
The kernel $K:F_{\text{in}} \to F_{\text{out}}$ is $G$-equivariant:
\[
K(\rho_{\text{in}}(g)w) \;=\; \rho_{\text{out}}(g)\,K(w)\qquad(\forall\,g\in G,\;w\in F_{\text{in}}).
\]
If $c_v:=\sum_{u\sim v} w_{uv}>0$, then
\begin{align*}
f'_{\text{out}}(v)
&= \frac{1}{c_v}\sum_{u\sim v} w_{uv}\,K\!\left(\rho_{\text{in}}(\psi'_{uv})^{-1}\, f'_{\text{in}}(u)\right) \\
&= \frac{1}{c_v}\sum_{u\sim v} w_{uv}\,K\!\left(\rho_{\text{in}}(\gamma_v^{-1}\psi_{uv}^{-1}\gamma_u)\,\rho_{\text{in}}(\gamma_u)^{-1} f_{\text{in}}(u)\right) \\
&= \frac{1}{c_v}\sum_{u\sim v} w_{uv}\,K\!\left(\rho_{\text{in}}(\gamma_v^{-1})\,\rho_{\text{in}}(\psi_{uv}^{-1})\,f_{\text{in}}(u)\right) \\
&= \rho_{\text{out}}(\gamma_v)^{-1}\,\frac{1}{c_v}\sum_{u\sim v} w_{uv}\,K\!\left(\rho_{\text{in}}(\psi_{uv}^{-1})\,f_{\text{in}}(u)\right)
\;=\; \rho_{\text{out}}(\gamma_v)^{-1}\,f_{\text{out}}(v).
\end{align*}
If $c_v=0$ (isolated $v$), then $f'_{\text{out}}(v)=K(f'_{\text{in}}(v))=K(\rho_{\text{in}}(\gamma_v)^{-1}f_{\text{in}}(v))=\rho_{\text{out}}(\gamma_v)^{-1}K(f_{\text{in}}(v))=\rho_{\text{out}}(\gamma_v)^{-1}f_{\text{out}}(v)$ by the intertwining property. Thus the layer is gauge-equivariant in all cases.
\end{proof}

\section{Empirical Evaluation}\label{sec:practical}

To demonstrate the practical utility of our framework, we apply Torsor CNNs to multi-view 3D object recognition, where the geometric structure naturally aligns with our theoretical construction. In this setting, multiple cameras observe a 3D object from different viewpoints, and crucially, the relative orientations between cameras are known—providing exactly the edge potentials our framework requires. We show that this geometric information can be exploited in two complementary ways: (A) building fully gauge-equivariant Torsor CNN architectures that explicitly use the camera transformations in their convolutions, or (B) adding the frustration loss as a geometric regularizer to existing multi-view networks without architectural changes. 

\subsection {Rotated Multi-View 3D Recognition on ModelNet40}

\begin{enumerate}
    \item \textbf{Dataset:} $\mathcal{D}_{\text{mesh}}=\{(S_n,y_n)\}_{n=1}^M$ where $S_n$ is a CAD mesh and $y_n\in\mathcal{C}$ is the category label with $|\mathcal{C}|=40$ \cite{wu2015modelnet}.  Each mesh is rendered from $N$ views using a discrete camera set on $\mathbb{S}^2$.
   \item \textbf{Relative-Rotation Augmented Dataset: }  The relative rotation $\psi_{ij}\in SO(3)$ between views $i$ and $j$ is computed from known camera poses. Let $R_i,R_j\in SO(3)$ be the absolute rotation matrices of cameras $i$ and $j$, respectively. Then
    \[\psi_{ij}:=R_iR_j^\top\in SO(3),\qquad \psi_{ji}=\psi_{ij}^{-1}\]
    (since $R_j^{-1}=R_j^\top$). Define $\psi_{ij}$ as the \emph{edge potentials} on the view graph. Then each training input is augmented as
    \[x_n=\Big(\{I_{n,i}\}_{i=1}^N, \{\psi_{ij}\}_{(i,j)\in E}\Big) \]
     where $I_{n,i}$ is the $i$-th rendered view of the $n$-th object $S_n$. \\
     Then the underlying graph in the setup is the view-graph: its vertices correspond to different rendered views ${I_{n,i}}$ of the same 3D object, and its edges are annotated with relative rotations $\psi_{ij} \in SO(3)$ computed from camera poses

    \item \textbf{Tasks:} The learning tasks are 
    \begin{itemize}
        \item \emph{classification}: predict $y\in\mathcal{C}$
        \item \emph{retrieval}: find, for a query object $q$, the top-$K$ most similar objects $\{o_i\}_{i=1}^K$ from a dataset $\mathcal{D}$ by minimizing a distance metric $d(f(q), f(o_i))$, where $f$ denotes an embedding function. The model also produces a global descriptor used for ranking (evaluated by mAP), e.g. MVCNN \cite{su2015mvcnn} and equivariant multi-view networks (EMVN) \cite{esteves2019emvn} serving as baselines.
    \end{itemize}    
    
    \item \textbf{View-Graph as a $G$-Torsor:}
    
     Let $X=(V,E)$ be the view-graph and fix $G\subset SO(3)$. From known camera rotations $\{R_i\in SO(3)\}$ define the \emph{edge potential} $\psi_{ij}:= R_iR_j^\top\in G$. This induces a network $G$–torsor $\mathcal P_\psi$ (Definition \ref{def:network-torsor-from-potential}). Given a representation $\rho:G\to\mathrm{GL}(F)$, the associated vector sheaf is $\mathcal E=\mathcal P_\psi\times_\rho F$ (Definition \ref{def:assoc-sheaf}). In the identity gauge, sections $\sigma\in\Gamma(X,\mathcal E)$
     are represented by $f:V\to F$ satisfying the synchronization relation
     $f(i)=\rho(\psi_{ij})\,f(j)$ on every edge $(i,j)\in E$. A torsor–CNN layer acts by transporting neighbors to the local frame \[
     (\Phi f)(i)=\mathrm{activation} \Big(\sum_{j\sim i} w_{ij} K\big(\rho(\psi_{ij})\,f(j)\big)\Big)\]
     where the intertwiner $K:F \to F'$ satisfying $K \rho(g)=\rho'(g)\,K$ for all $g\in G$.
     For any gauge $\gamma:V\to G$, 
     \[ f^\gamma(i)=\rho(\gamma_i)^{-1}f(i), \;\;\psi_{ij}^\gamma=\gamma_i^{-1}\psi_{ij}\gamma_j\]
     one has $(\Phi f)^\gamma(i)=\rho'(\gamma_i)^{-1}(\Phi f)(i)$, i.e., the layer is gauge–equivariant.

     \item \textbf{Two Realizations of Torsor-Aware Learning:}
     \begin{enumerate}
         \item[A.] \textbf{Direct Enforcement via Torsor CNN.} We instantiate gauge–equivariant layers right above that explicitly transport features across views using the known edge potentials $\psi_{ij}$. Specifically speaking, for the $i$-th node
     \[(\Phi f)(i)= \mathrm{activation}\left(\sum_{j\sim i}w_{ij}K\big(\rho(\psi_{ij})f(j)\big)\right),\quad K\rho(g)=\rho'(g)K\]
     To obtain a global descriptor, we synchronize all node features to a fixed reference view $r$. Let $\psi_{ir}$ denote the potential along a path from $i$ to $r$; then the aligned feature is
     \[\hat f(i)=\rho(\psi_{ir})\,f(i)\]
     The global descriptor is the pooled representation
     \[z=\mathrm{Pool}_{i\in V}\,\hat f(i)\]
     where Pool can be mean/max pooling or attention. Each task is expected to inherit certain properties from this setting:
          \begin{itemize}
      \item \emph{Classification}: The synchronized global descriptor $z$ is passed to a classifier. Compared to baselines such as MVCNN that need to re-learn the geometry from the data, torsor CNNs embed the camera poses geometry directly. We thus expect improved accuracy with fewer views and better generalization under noisy or missing views.  
      \item \emph{Retrieval}: Using the synchronized descriptor $z$ as embedding, we train with a metric learning objective. Explicit synchronization reduces intra-class variance across camera poses, which is expected to yield higher mAP. For example, consider the standard triplet loss formulation \cite{schroff2015facenet,hermans2017defense}:
      \[\mathcal{L}_{\text{triplet}} = \max \left(0, \|f(x_i^a) - f(x_i^p)\|_2^2 - \|f(x_i^a) - f(x_i^n)\|_2^2 + \alpha \right)\] where $f(x_i^a)$ is the anchor feature (from a reference frame) for the $i$-th object, $f(x_i^p)$ is a positive feature from a different view of the same object and $f(x_i^n)$ is a negative feature from a different object. $\alpha > 0$ is a margin threshold.
     \end{itemize}
     Now, suppose we apply feature alignment using known camera pose transformations. Again, let $\psi_{ij}$ denote the edge potential (transformation) from view $j$ to view $i$, and $\rho$ be its representation in feature space (assumed to be isometric). We align all features to the anchor's view.\\
     Define the aligned features as $\hat{f}(x_i^a) = f(x_i^a)$ (anchor already in reference view), $\hat{f}(x_i^p) = \rho(\psi_{ap}) f(x_i^p) \text{ and } \hat{f}(x_i^n) = \rho(\psi_{an}) f(x_i^n)$. Assuming perfect alignment, the positive feature becomes identical to the anchor feature in the aligned space $\hat{f}(x_i^a) = \hat{f}(x_i^p)$. The triplet loss using aligned features is then:
     \begin{align*}
     \mathcal{L}_{\text{aligned}} 
     &= \max \left(0, \|\hat{f}(x_i^a) - \hat{f}(x_i^p)\|_2^2 - \|\hat{f}(x_i^a) - \hat{f}(x_i^n)\|_2^2 + \alpha \right)  \\
     &= \max \left(0, 0 - \|\hat{f}(x_i^a) - \hat{f}(x_i^n)\|_2^2 + \alpha \right)  \\
     &= \max \left(0, \alpha - \|\hat{f}(x_i^a) - \hat{f}(x_i^n)\|_2^2 \right)
     \end{align*}
     Thus, the loss depends only on the inter-class distance (between anchor and negative), as the intra-class distance (between anchor and positive) becomes zero.
     More generally, even if alignment is not perfect, it significantly reduces intra-class variance. Let $d_{\text{intra}} = \mathbb{E}[\|f(x_i^a) - f(x_i^p)\|2^2]$ be the expected intra-class distance without alignment, and $d_{\text{intra}}^{\text{aligned}} = \mathbb{E}[\|\hat{f}(x_i^a) - \hat{f}(x_i^p)\|2^2]$ with alignment. Effective alignment ensures $d_{\text{intra}}^{\text{aligned}} \ll d_{\text{intra}}$
     The goal of the triplet loss is to ensure that the intra-class distance is less than the inter-class distance by at least the margin $\alpha$, i.e., $d_{\text{intra}} < d_{\text{inter}} - \alpha$. After alignment, since $d_{\text{intra}}^{\text{aligned}}$ is greatly reduced, the inequality $d_{\text{intra}}^{\text{aligned}} < d_{\text{inter}}^{\text{aligned}} - \alpha$ is much easier to satisfy. This further makes the loss function easier to optimize. 
        \item[B.] \textbf{Frustration Energy as a Regularizer.}      Without altering the backbone (MVCNN, EMVN, etc.), we add a synchronization regularization term:
     \[\mathcal L_{\mathrm{sync}}
      =\sum_{(i,j)\in E}\|f(i)-\rho(\psi_{ij})f(j)\|^2\]
      The overall loss combines the task loss with $\lambda\mathcal L_{\mathrm{sync}}$.  Again, we expect that tasks can leverage certain structural properties:
      \begin{itemize}
          \item \emph{Classification}: The regularizer encourages alignment of view features consistent with the known geometry and thus the burden on pooling to discover it implicitly. We expect faster convergence with few or noisy views.
          \item \emph{Retrieval}: By penalizing frustration, embeddings for the same object under different viewpoints become more consistent. We thus expect higher mAP and better robustness for rare categories.
      \end{itemize}
     \end{enumerate}

\end{enumerate}

\end{document}